\title{Convergence Analysis of Discrete Diffusion Model: \\ Exact Implementation through Uniformization}
\author{Hongrui Chen\thanks{Stanford University, \href{hongrui@stanford.edu}{hongrui@stanford.edu} }
\quad
Lexing Ying \thanks{Stanford University, \href{lexing@stanford.edu}{lexing@stanford.edu}}
}
\date{\today}
\begin{document}

\maketitle
\begin{abstract}
Diffusion models have achieved huge empirical success in data generation tasks. Recently, some efforts have been made to adapt the framework of diffusion models to discrete state space, providing a more natural approach for modeling intrinsically discrete data, such as language and graphs. This is achieved by formulating both the forward noising process and the corresponding reversed process as Continuous Time Markov Chains (CTMCs). In this paper, we investigate the theoretical properties of the discrete diffusion model. Specifically, we introduce an algorithm leveraging the uniformization of continuous Markov chains, implementing transitions on random time points. Under reasonable assumptions on the learning of the discrete score function, we derive Total Variation distance and KL divergence guarantees for sampling from any distribution on a hypercube. Our results align with state-of-the-art achievements for diffusion models in $\RR^d$ and further underscore the advantages of discrete diffusion models in comparison to the $\RR^d$ setting.
\end{abstract}
\section{Introduction}
Generative modeling is one of the central tasks in machine learning, which aims to learn a probability distribution from data and generate data from the learned distribution. The diffusion model has emerged as a powerful and versatile framework in generative modeling, achieving state-of-the-art performance in a variety of data generation tasks, including image generation \cite{nichol2021improved,bartal2023multidiffusion}, audio generation \cite{schneider2023archisound}, video generation \cite{yang2022diffusion,ho2022video}, text-to-image synthesis \cite{ramesh2021zeroshot, ramesh2022hierarchical}, and computational biology \cite{guo2023diffusion}. The general framework of the score-based generative model involves 1) defining a forward noising process to gradually diffuse the data distribution to some simple distribution (like standard Gaussian); 2) learning a reversed process to denoising the simple distribution to the data distribution by estimating the score functions of the forward diffusion process. 

Works on the diffusion model focus on the forward processes defined in the Euclidean state space $\RR^d$. In such scenarios, an ideal choice of the forward process is the Ornstein-Uhlenbeck (OU) process, which is driven by a stochastic differential equation(SDE) on $\RR^d$, and the corresponding reversed process is also given by an SDE. Nevertheless, certain data generation tasks present an intrinsic characteristic of discrete data. For example, natural language processing operates within a discrete token space; computer vision involves discrete representations of images; and molecular graph modeling engages with graph data in a discrete structure \cite{hoogeboom2021argmax, zheng2023reparameterized,huang2023conditional}.
Thus, it is more natural to use diffusion processes on the discrete state space to model these discrete data distributions. 

To this end, some recent works \cite{meng2023concrete, NEURIPS2022_b5b52876, benton2023denoising, santos2023blackout, lou2023discrete} have introduced a framework for diffusion models in discrete state spaces. This framework notably utilizes a continuous-time Markov chain (CTMC) in the discrete state space for the forward process, and the corresponding reverse process is also a CTMC. Moreover, mirroring the concept of score estimation in diffusion models on $\RR^d$, they proposed a discrete score function given by the ratios of probability mass on different states, and the score entropy loss as a new score matching objective that is derived from KL divergence between the path measures of the forward and the reversed process. Combining the learning of the discrete score function through minimizing the score entropy and the sampling from the learned reversed process, a completed procedure for the diffusion model on discrete state space has been established.

However, despite the potential advantage of the discrete diffusion model, unlike the extensively studied SDE framework, the theoretical understanding of the CTMC framework has not been built. A line of works \cite{convergence-score, convergencescore2, SamplingEasy, chen2023improved, li2023faster, benton2023linear} concentrated on the theory of diffusion model on $\RR^d$. Generally speaking, the established theoretical results can be summarized as follows:
\begin{itemize}
\item Sampling is as easy as learning the score: for arbitrary data distribution, suppose one can estimate the score function at multiple noise levels, then one can approximately sample from the data distribution.
\item Quantitatively, under an $L^2$ accurate score estimator on the forward process, $O\left(\frac{d\log(1/\delta)}{\epsilon^2}\right)$ iterations suffices to output a distribution that is $\epsilon^2$-close in KL divergence to a distribution $p_\delta$, where $p_\delta$ is a variance-$\delta$ Gaussian perturbation of the data distribution.
\item There are three sources of error in the diffusion model: 1)the error from the inexact score estimator, 2)the error from insufficient mixing of the forward process, and 3)the discretization error. The discretization error causes the key challenges in the analysis due to the error propagation in the numerical simulation of a non-contractive dynamic.
\end{itemize} 
In this paper, we take a step toward the theory of diffusion model in the CTMC framework and aim to understand how the theoretical property of discrete diffusion compares to the established theory for diffusion model on $\RR^d$. Our results suggest that:
\begin{itemize}
\item One can implement the reversed CTMC in an \emph{exact} way, i.e., without discretization error, through an algorithm based on the uniformization technique \cite{GRASSMANN197747,VANDIJK1992339,deSouzaeSilva2000}. This presents a surprising advantage of the CTMC framework compared to the SDE framework, where discrete errors are significant in the analysis.
\item The proposed algorithm is guaranteed to be efficient by our theoretical analysis. Quantitatively, assuming an $\epsilon$-error bound on the score entropy loss,  our purposed algorithm requires $O(d\log(d/\epsilon \delta))$ steps to reach a distribution $\epsilon$-close in KL divergence to a distribution $p_\delta$, where $p_\delta$ is a $\delta$-perturbation of the data distribution in TV distance. Combining the two errors, we also get a TV distance guarantee for sampling from the data distribution. Thus, we obtain a logarithm dependence on $\epsilon$ for the discrete diffusion model, in contrast to the $\epsilon^{-2}$ dependence for the SDE framework. 
\end{itemize}

\paragraph{Organization of this paper}
In Section \ref{sc2}, we introduce some preliminary background on CTMC and the uniformization technique for CTMC. In Section\ref{sc3}, we introduce the framework of the diffusion model using CTMC on discrete state space; in Section \ref{sc4}, we propose our new algorithm for numerical simulation of the reversed CTMC process and analyze the complexity and error of the proposed algorithm.

\subsection{Related works}
\paragraph*{Diffusion Models on Discrete State Space}
In the initial stages of development, the diffusion model was formulated as a discrete-time Markov process. It was first introduced in \cite{pmlr-v37-sohl-dickstein15}, marking the pioneering work in applying the diffusion process for data generation tasks. Although the focus has primarily been on continuous-space formulations, the discrete-time and discrete-space framework of the diffusion model was also described in the initial work \cite{pmlr-v37-sohl-dickstein15} and then further explored in \cite{seff2019discrete,hoogeboom2021argmax,austin2023structured}.
 Some works, such as \cite{niu2020permutation,song2021maximum}, tackle the discrete data by embedding it to $\RR^d$ and employing the continuous-space diffusion model framework with dequantization techniques. This approach has also been popular in other generative models like Variational Autoencoders (VAE) and auto-regressive models.

A significant breakthrough in diffusion models was achieved when Song et al. in \cite{song2021maximum, SGMSDEsong} introduced a continuous framework through Stochastic Differential Equations. 
As a discrete analogy of the SDE framework,  firstly established the CTMC framework for diffusion models on discrete state space. This paper introduces an ELBO objective derived from the KL divergence between path measures and proposes multiple numerical methods for simulating the reversed process, including $\tau$-leaping and prediction-correction.
 \cite{meng2023concrete} proposed the concrete score matching method to learn the probability ratio term in the reversed CTMC. However, this objective does not align with the path-KL and might lack stability in practice.
 \cite{sun2023scorebased} alternatively derive the score-matching objective from the conditional marginal distributions. 
\cite{santos2023blackout} proposed a different, forward process called blackout diffusion, which transforms data to an empty set instead of uniform distribution. \cite{zheng2023reparameterized,lou2023discrete} applies the continuous-time discrete diffusion model to generate language data and, in particular, \cite{lou2023discrete} scale the method to GPT2 data, demonstrating its potential effectiveness in handling large datasets.

\paragraph*{Theory of Diffusion Models}
This paper is closely related to a series of works \cite{convergence-score, Bortoli2022ConvergenceOD, convergencescore2, SamplingEasy, chen2023improved, li2023faster, benton2023linear} focused on the theoretical analysis of diffusion models in $\RR^d$. Specifically, these studies seek to answer the following question: given an $L^2$-accurate score estimator, how closely does the distribution generated by the reverse SDE with the score estimator, in place of the actual score function, and with appropriate discretization, approximate the data distribution? This question was first addressed for smooth and isoperimetric distributions in \cite{convergence-score}, followed by a reduction of the isoperimetry assumption in \cite{SamplingEasy} and the smoothness assumption in \cite{convergencescore2, chen2023improved}. The state-of-art result, which is applicable to any distribution and shows a nearly linear dependence on the dimension $d$, is provided in \cite{benton2023linear}. In this paper, we answer this question for the discrete diffusion model. Our results match the state-of-the-art theory in \cite{benton2023linear} for the $\RR^d$ setting, applying to any distribution on the hypercube and exhibiting a nearly linear dependence on $d$.

\cite{NEURIPS2022_b5b52876} also provides an error bound for the discrete diffusion model. However, this analysis relies on some strong assumptions like the $L^\infty$ accuracy of the score estimator and the bounded probability ratio of the data distribution. In addition, their result also has a sub-optimal quadratic dependence on the dimension $d$. In this paper, we will reduce all these strong assumptions and provide a nearly optimal bound that exhibits a linear dependence on $d$ up to a logarithmic factor. We are aware that uniformization appeared in the proof of \cite{NEURIPS2022_b5b52876}. However, this work is the first to deploy uniformization as a working algorithm for discrete diffusion models and prove its efficiency.

In addition, some works focus on other aspects of the theoretical understanding of diffusion models. For example, \cite{li2023faster,chen2023probability} analyze the (approximately) deterministic algorithm for the reversed sampling of diffusion model; \cite{li2024generalization,gupta2023sampleefficient} studies the sample complexity of learning the score function.

\section{Preliminaries on Continuous-time Markov Chain}\label{sc2}
Let $\cX$ be a finite state space with $|\cX| = N$. A CTMC $(X_t)_{t \ge 0} $ on $\cX$ is a continuous stochastic process that satisfies the Markov property. In this process, state changes occur based on an exponential clock associated with each state, with transitions to different states determined by the corresponding transition rates.

Mathematically, a continuous-time markov chain $(X_t)_{t\ge 0}$ on $\cX$ is characterized by its infinitesimal generator $Q(t) \in \RR^{N\times N}$ with $Q_{x,x}(t) = -\sum_{y \ne x}Q_{x,y}(t)$. Denote the transition kernel of $(X_t)$ from time $s$ to time $t$, i.e., $\PP(X_t = y|X_s = x) = P_{x,y}(s,t)$. The infinitesimal transitions of the process are determined by the generator through $P(t,t+h) = I+hQ(t)+o(h)$. This leads to the Kolmogorov forward equation $\frac{\d }{\d t}{P(s,t)} = P(s,t)Q(t). $ In particular, For time-homogeneous cases $Q(t) \equiv Q$, we have $P(s,t)  = e^{(t-s)Q}$.

 To understand how the process $(X_t)$ can be constructed, let's start with the simple case that $Q$ is a time-homogeneous generator $Q(t)  \equiv Q$. For each $x, y \in \cX$,  $Q_{x,y} \ge 0$ can be viewed as the transition rate from the state $x$ to the state $y$ and $-Q_{x,x} = \sum_y Q_{x,y}$ is the intensity of the exponential clock for the state $x$. Starting from a state $x \in \cX$, the process transitions after a holding time determined by an $\mathrm{Exp}(Q_{x,x})$ random variable, with the transition probabilities defined as
$$ \PP(t = y| t_-= x ) = -Q_{x,y}/Q_{x,x},\quad \forall y \ne x.  $$

The time-inhomogeneous case is slightly more complicated. Intuitively, it can be understood as a time-homogeneous CTMC with a generator $Q(t)$ at each infinitesimal interval $[t,t+dt]$. So, a natural approach for numerical simulation involves discretizing the process and simulating the time-homogeneous CTMC within each distinct time interval. However, more sophisticated methods exist for simulating a time-inhomogeneous CTMC without the need for discretization. This method is known as \emph{uniformization}. It decouples the clocks and transition mechanisms to a single Poisson point process, and a set of Markov transition kernels $(\tilde{P}(t))_{t\ge 0}$. The intensity of the Poisson point process $\lambda$ uniformly bounds all clock intensities $Q_{xx}$, and the transition kernels are defined by:
\begin{align} \label{kernel}   \tilde{P}_{x,y}(t) =
\begin{cases}
\frac{{Q}_{x,y}(t)}{\lambda}    &   y \ne x \\
1- \sum_{y \ne x}\frac{{Q}_{x,y}(t)}{\lambda} & y  = x
\end{cases},
\end{align}
 or in a matrix representation $\tilde{P}(t) = I + \frac{1}{\lambda}Q(t)$.
 Simulating the CTMC involves changing the state according to $\tilde{P}(t)$ whenever a transition occurs in the Poisson point process. Formally, we have the following proposition:
\begin{proposition}[Uniformization of CTMC]\label{uniformization}
Consider a general CTMC on a finite state space $\cX$ with the generator $Q(t)$. Let $p(t)$ be the distribution of the CTMC at time $t$. Suppose $Q_{x,x}(t) \le \lambda$ for any $x \in \cX$ and $0\le t \le T$. Let $(\tilde{P}(t))_{t \ge 0}$ be the transition kernels given by \eqref{kernel}. Let $\tau_1 < \tau_2 < \cdots < \tau_n$ be the transition times within $[0,T]$ of a Poisson point process with intensity $\lambda$, or equivalently, $n$ is drawn from $\mathrm{Poisson}(\lambda)$ and $\tau_1,\cdots,\tau_n$ is sorted i.i.d. samples from $\mathrm{Unif}([0,T])$. Conditioning on the number of transition $n$ by time $T$ and on the transition times $\tau_1,\tau_2,\cdots,\tau_n $, we have $p(T)|(n,\tau_1,\cdots,\tau_n) = p(0)\tilde{P}(\tau_1)\tilde{P}(\tau_2)\cdots \tilde{P}(\tau_n)$.
\end{proposition}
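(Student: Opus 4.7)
The strategy is to show that the right-hand side of the claimed identity, viewed as a function of $T$, satisfies the same Kolmogorov forward equation as $p(t)$ with the same initial condition, so that the two must agree by uniqueness of the ODE. Concretely, I would define
\[
q(t) \;=\; \EE_{n,\tau_1,\ldots,\tau_n}\!\left[\,p(0)\,\tilde P(\tau_1)\cdots\tilde P(\tau_n)\,\right],
\]
where the expectation is over the Poisson point process of intensity $\lambda$ restricted to $[0,t]$ and the ordered transition times $\tau_1<\cdots<\tau_n$ therein. The goal is to show $q(t)=p(t)$ for all $t\in[0,T]$.

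The initial condition is immediate: at $t=0$ the Poisson process has produced no points almost surely, so $q(0)=p(0)$. For the ODE, I would expand $q(t+h)$ by conditioning on the number of Poisson arrivals in $(t,t+h]$. With probability $1-\lambda h+o(h)$ no arrival occurs and the product of kernels is unchanged, contributing $q(t)$; with probability $\lambda h+o(h)$ exactly one arrival occurs, uniformly distributed in $(t,t+h]$, giving a factor $\tilde P(\tau)$ on the right with $\tau\in(t,t+h]$, which tends to $\tilde P(t)$ as $h\to 0$ by continuity of $t\mapsto Q(t)$; two or more arrivals contribute $O(h^2)$. Collecting these,
\[
q(t+h) \;=\; q(t)\bigl(1-\lambda h\bigr) + \lambda h\, q(t)\tilde P(t) + o(h)
\;=\; q(t) + h\,q(t)\,\lambda\bigl(\tilde P(t)-I\bigr) + o(h).
\]
Since $\lambda(\tilde P(t)-I)=Q(t)$ by the definition \eqref{kernel}, this yields $\tfrac{\mathrm{d}}{\mathrm{d}t} q(t) = q(t)Q(t)$, the Kolmogorov forward equation. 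Uniqueness for this linear ODE with initial value $p(0)$ forces $q(t)=p(t)$, which is exactly the claim after taking $t=T$ and conditioning back on $(n,\tau_1,\ldots,\tau_n)$.

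Two small points require a bit of care but no deep idea. First, one must check that $\tilde P(t)$ is indeed a legitimate stochastic matrix under the hypothesis $Q_{x,x}(t)\le\lambda$, so that the product of kernels in the definition of $q(t)$ makes probabilistic sense; this is a direct verification from \eqref{kernel}. Second, the ``conditioning'' step—showing that conditional on $(n,\tau_1,\ldots,\tau_n)$ the product $p(0)\tilde P(\tau_1)\cdots\tilde P(\tau_n)$ really is the conditional distribution $p(T)\mid(n,\tau_1,\ldots,\tau_n)$—follows from the standard fact that, given $n$ arrivals of a homogeneous Poisson process of rate $\lambda$ on $[0,T]$, the ordered arrival times are distributed as order statistics of $n$ i.i.d.\ $\mathrm{Unif}([0,T])$ samples. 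I expect the main (minor) obstacle to be bookkeeping: making the $o(h)$ estimate rigorous uniformly in $t$ using continuity and boundedness of $Q(t)$ on $[0,T]$, and handling the fact that the kernels $\tilde P(\tau_i)$ do not commute, so the order of the product matters throughout the derivation.
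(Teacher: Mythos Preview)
Your ODE argument is correct and is a genuinely different route from what the paper sketches. The paper does not give a full proof: it observes that in the time-homogeneous case the identity follows from the Taylor expansion $p(t)=p(0)e^{tQ}=p(0)\sum_{n\ge 0}\tilde P^{\,n}\,\frac{(\lambda t)^n}{n!}e^{-\lambda t}$, then argues the inhomogeneous case by approximating $Q(t)$ piecewise-constantly, invoking the Markov property on each piece, and passing to the limit (deferring a rigorous treatment to a reference). Your approach instead verifies directly that the uniformization mixture $q(t)=\EE\!\big[p(0)\tilde P(\tau_1)\cdots\tilde P(\tau_n)\big]$ satisfies the Kolmogorov forward equation $q'(t)=q(t)Q(t)$ with $q(0)=p(0)$, and appeals to uniqueness for this linear ODE. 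This buys you a cleaner treatment of the inhomogeneous case (no piecewise approximation, no limit), at the cost of needing a mild regularity assumption on $t\mapsto Q(t)$ to justify the $o(h)$ term; the paper's series argument, on the other hand, gives an explicit closed-form in the homogeneous case without differentiation.

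One minor point: your ``second small point'' about conditioning is slightly muddled. Once you have proved $q(T)=p(T)$, the proposition as stated is already established, since the identity $p(T)=\EE_{n,\tau}\big[p(0)\tilde P(\tau_1)\cdots\tilde P(\tau_n)\big]$ is exactly what is meant (the conditional expression $p(0)\tilde P(\tau_1)\cdots\tilde P(\tau_n)$ is the distribution of the \emph{uniformized} chain given the Poisson skeleton by construction, not something that must be derived). The order-statistics fact about Poisson processes is relevant only to the parenthetical ``or equivalently'' clause in the proposition, not to the conditional claim itself.
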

In the time-homogeneous setting $Q(t) \equiv Q$, Proposition \ref{uniformization} can be simply deduced through a Taylor expansion:
$$ p(t) = p(0)e^Q =p(0) \sum_{n=0}^\infty \tilde{P}^n \frac{(\lambda t)^n}{n!} e^{-\lambda t},      $$
implying the transition of the CTMC $e^Q$ can be executed by applying the transition $\tilde{P}$ for $\mathrm{Poisson}(\lambda t)$ times. For a general setting, an intuitive way to understand the uniformization involves approximating the generator $Q(t)$ by a piece-wisely constant function. The results of uniformization in the homogeneous setting can be easily adapted to cases with piece-wise constant $Q(t)$ by the Markov property. Since any general function $Q(t)$ can be approximated by piece-wise constant function to arbitrary precision, uniformization thus provides an exact simulation of the time-inhomogeneous CTMC process. A rigorous proof of this can be found in \cite{10.1016/0167-6377(92)90086-I}.

In practice, however, simulating CTMC through uniformization may entail substantial computational costs, as the number of necessary transitions is contingent upon the uniform bound imposed on all the clocks. This renders the computation resource-intensive, especially in scenarios involving stiff problems where $Q_{x,x}(t)$ dramatically changes across different states $x$ and different times $t$. Nevertheless, we will demonstrate that, when applying this method to the discrete diffusion model, we can obtain a provable efficient algorithm by adaptively selecting the upper bound $\lambda$ across different time intervals. 

\section{Framework of Discrerte Diffusion Mode} \label{sc3}


\subsection{General Procedure} \label{sc2.1}
\paragraph*{The forward and reversed CTMC}
Let $p(0)$ be a data distribution on a $\cX$. Consider a forward process $(X_t)_{0\le t \le T}$ defined by a CTMC with a generator $Q(t)$ starting from $X_0 \sim p(0)$. The distribution of the forward process at time $t$ is denoted by $p(t)$.

As an analogy of the reversed SDE used in diffusion model on $\RR^d$ \cite{Anderson1982ReversetimeDE,SGMsong}, we
construct a CTMC $(X_t^{\leftarrow})_{0 \le t \le T}$ as a time reversal of  $(X_t)$, meaning $(X_t^{\leftarrow})_{0 \le t \le T} $ is equivalent to $(X_{T-t})_{0 \le t \le T} $ in distribution. As discussed in \cite{NEURIPS2022_b5b52876,benton2023denoising,lou2023discrete}, this time reversal can be achieved by a CTMC starting from $X_0^{\leftarrow} \sim p(T)$ and governed by the generator 
\begin{align}
Q^{\leftarrow}_{x,y}(t) := Q_{y,x}(t)\frac{p_y(t)}{p_x(t)}. \label{reverse-generator}
\end{align}
One can sample from the data distribution $p(0)$ if it is possible to simulate the reversed process $(X_t^{\leftarrow})$.
However, 
the ratio term  $\frac{p_y(t)}{p_x(t)}$ in the generato \eqref{reverse-generator} is not available. This term is referred to as the \emph{discrete score function}.
The idea is to parameterize the discrete score function within a function class, such as neural networks, and learn it from training data.
\paragraph*{Training objectives for learning the discrete score}
We denote the discrete score function as $c_{x,y}(t):=\frac{p_y(t)}{p_x(t)}$ and $s_{x,y}(t)$ represents an estimation of $c_{x,y}(t)$ used for sampling. Consider the sampling dynamic $(\hat{X}_t^{\leftarrow})_{0\le t\le T}$ that is a CTMC with the generator $\hat{Q}^{\leftarrow}_{x,y}(t) := Q_{y,x}(t) s_{x,y}(t)$, initiating  from a simple distribution $\gamma \approx p(T)$. The following proposition gives an expression of the KL divergence between the true reversed process $({X}_t^{\leftarrow})$ and the sampling dynamic  $(\hat{X}_t^{\leftarrow})$.
\begin{proposition}\label{pathKL}
Let $\PP^\leftarrow$ and $\hat{\PP}^\leftarrow$ be the path measure of $({X}_t^{\leftarrow})$ and $(\hat{X}_t^{\leftarrow})$, respectively. We have
\begin{align}\label{pathKL-eq}
\begin{aligned}
\KL(\PP^\leftarrow \| \hat{\PP}^\leftarrow) = \KL(p(T)\|\gamma) + \int_0^T \EE_{X_t \sim p(t)}  \ell(c_{X_t},s_{X_t}) \d t, 
\end{aligned}
\end{align}
where
\begin{align*}
 \ell(c_{x},s_{x}) = \sum_{y \ne x} Q_{y,x}\left(-c_{x,y}(t)+s_{x,y}(t) + c_{x,y}\log \frac{c_{x,y}(t)}{s_{x,y}(t)} \right).
\end{align*}
\end{proposition}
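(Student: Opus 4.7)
The plan is to derive Proposition \ref{pathKL} from the Girsanov/Radon-Nikodym formula between two path measures of CTMCs with different initial laws and generators, followed by the standard compensator identity for pure-jump processes. Concretely, for the two reversed chains on $[0,T]$ with initial laws $p(T),\gamma$ and generators $Q^{\leftarrow}(t),\hat Q^{\leftarrow}(t)$, the log density of $\PP^{\leftarrow}$ with respect to $\hat\PP^{\leftarrow}$ is
\[
\log \frac{\d\PP^{\leftarrow}}{\d\hat\PP^{\leftarrow}} = \log \frac{p_{X_0^{\leftarrow}}(T)}{\gamma_{X_0^{\leftarrow}}} + \sum_{k:\,\tau_k\le T} \log \frac{Q^{\leftarrow}_{X_{\tau_k-}^{\leftarrow},X_{\tau_k}^{\leftarrow}}(\tau_k)}{\hat Q^{\leftarrow}_{X_{\tau_k-}^{\leftarrow},X_{\tau_k}^{\leftarrow}}(\tau_k)} - \int_0^T \sum_{y\ne X_t^{\leftarrow}} \bigl(Q^{\leftarrow}_{X_t^{\leftarrow},y}(t) - \hat Q^{\leftarrow}_{X_t^{\leftarrow},y}(t)\bigr)\,\d t,
\]
where $(\tau_k)$ enumerates the jump times. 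This is the standard Girsanov formula for inhomogeneous pure-jump Markov processes and requires only that $\hat Q^{\leftarrow}_{x,y}(t)>0$ whenever $Q^{\leftarrow}_{x,y}(t)>0$.

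Next I take $\EE^{\leftarrow}$ on both sides. The initial term contributes $\KL(p(T)\|\gamma)$. For the sum over jumps I apply the predictable compensator identity
\[
\EE^{\leftarrow}\sum_{k:\,\tau_k\le T} f(\tau_k,X_{\tau_k-}^{\leftarrow},X_{\tau_k}^{\leftarrow}) = \EE^{\leftarrow}\int_0^T \sum_{y\ne X_t^{\leftarrow}} Q^{\leftarrow}_{X_t^{\leftarrow},y}(t)\, f(t,X_t^{\leftarrow},y)\,\d t,
\]
applied with $f = \log(Q^{\leftarrow}/\hat Q^{\leftarrow})$. Combining the resulting integral with the drift integral above gives a single expectation of the nonnegative integrand
\[
\sum_{y\ne x}\Bigl[Q^{\leftarrow}_{x,y}(t)\log\frac{Q^{\leftarrow}_{x,y}(t)}{\hat Q^{\leftarrow}_{x,y}(t)} - Q^{\leftarrow}_{x,y}(t) + \hat Q^{\leftarrow}_{x,y}(t)\Bigr].
\]

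The last step is pure bookkeeping. Substituting $Q^{\leftarrow}_{x,y}(t)=Q_{y,x}(t)c_{x,y}(t)$ and $\hat Q^{\leftarrow}_{x,y}(t)=Q_{y,x}(t)s_{x,y}(t)$, the ratio inside the logarithm simplifies to $c_{x,y}(t)/s_{x,y}(t)$, and the common factor $Q_{y,x}(t)$ pulls out to produce exactly the function $\ell(c_x,s_x)$ in the statement. Finally, since $X_t^{\leftarrow}\sim p(T-t)$, the change of variable $t\mapsto T-t$ in the time integral rewrites the outer expectation as $\EE_{X_t\sim p(t)}$, matching \eqref{pathKL-eq}.

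The main obstacle is really just the first step: writing down and justifying the Girsanov-type density formula between the two CTMC path measures, together with the associated martingale/compensator identity used to take the expectation of the jump sum. Once these standard ingredients for pure-jump processes are invoked, the remainder is algebraic simplification using the explicit forms of $Q^{\leftarrow}$ and $\hat Q^{\leftarrow}$ plus a time reversal, both of which are routine. One should also briefly check the time-parametrisation bookkeeping to ensure $c$, $s$, and $Q$ end up evaluated at matching forward time indices after the substitution.
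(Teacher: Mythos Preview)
Your proposal is correct and arrives at the same identity, but the route differs from the paper's. The paper's Appendix~\ref{proof-pathKL} gives an explicitly \emph{heuristic} derivation: it discretizes the path with mesh $\epsilon$, writes the one-step likelihood ratio as either $\frac{1-\sum_{y\ne x_i}Q_{y,x_i}c_{x_i,y}\epsilon}{1-\sum_{y\ne x_i}Q_{y,x_i}s_{x_i,y}\epsilon}$ or $\frac{c_{x_i,x_{i+1}}}{s_{x_i,x_{i+1}}}$ according to whether a jump occurs, takes logs, replaces $\EE[\delta_{x_i\to y}]$ by $Q_{y,x_i}c_{x_i,y}\epsilon$, and passes to the limit $\epsilon\to 0$; the paper itself calls this only an ``intuitive'' computation and cites \cite{benton2023denoising} for a rigorous argument. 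You instead invoke directly the Girsanov density formula for inhomogeneous pure-jump Markov processes together with the predictable compensator identity, which is the rigorous machinery the discretization is morally approximating. What your approach buys is that, modulo citing the two standard ingredients, the proof is immediately rigorous and avoids the $\epsilon\to 0$ limit; what the paper's approach buys is self-containment and transparency about where each term comes from. The remaining bookkeeping (substituting $Q^{\leftarrow}_{x,y}=Q_{y,x}c_{x,y}$, $\hat Q^{\leftarrow}_{x,y}=Q_{y,x}s_{x,y}$, and the time-reversal $t\mapsto T-t$) is identical in both.
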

This is an analogy of the Girsanov theorem for the SDE framework. We defer an intuitive derivation to Appendix \ref{proof-pathKL} and refer to \cite[Section A]{benton2023denoising} for rigorous proof.
Note that the first term in the right-hand side of \eqref{pathKL-eq} is fully characterized by the convergence of the forward process; thus, our focus should primarily be on the second term for learning the score function. 
Omitting the terms that are independent of the score estimator $s$, we obtain the \emph{implicit score entropy} \cite{benton2023denoising,lou2023discrete}:
\begin{align} \label{ISM}
 \cL_{\text{ISE}} =   \int_0^T \EE_{X_t \sim p(t)}\Bigg[ \sum_{Y \ne X_t}\big(Q_{Y,X_t}s_{X_t,Y}  - Q_{X_t,Y} \log s_{Y,X_t}(t)\big)\Bigg]   \d t 
 \end{align}

This objective \eqref{ISM} can be optimized by substituting the expectation with an empirical mean over samples.

For enhanced computational efficiency, a variant of implicit score entropy called the \emph{denoising score entropy} is proposed \cite{benton2023denoising,lou2023discrete}:
\begin{align} \label{DSE}
\cL_{\text{DSE}}  =  \int_0^T \EE_{X_0 \sim p(0),\,X_t \sim p(t)}\Bigg[\sum_{Y\ne X_t}Q_{Y,X_t}\big(s_{X_t,Y}(t)  - \frac{P_{X_0,Y}(0,t)}{P_{X_0,X_t}(0,t)} \log s_{X_t,Y}(t) \big) \Bigg] \d t,      
\end{align}
where $P_{x,y}(s,t)$ is the transition kernel of the forward process $(X_t)$. 
Note that the denoising score entropy includes a term from the transition kernel of the forward process, implying that the design of the forward process should facilitate explicit computation of this kernel. One example is the independent flips on hypercube, which is the main focus of this paper and will be introduced in section \ref{sc2.2}. 

Note that the implicit score entropy and the denoising score entropy are analogous to the implicit score matching objective \cite{Hyvrinen2005EstimationON} and the denoising score matching objective \cite{Vincent2011ACB}, respectively, in the setting of a diffusion model on $\RR^d$.

\subsection{Independent flips on hypercube as forward process} \label{sc2.2}
In this section, and throughout the remainder of this paper, we concentrate on a scenario where the state space is the $d$-dimensional hypercube $\cX= \{0,1\}^d$ and the forward process is given by independent flips on each component. This process is formally described as a CTMC with the generator:
\begin{align} \label{OU-generator}
\begin{aligned}
    Q_{x,y}(t)= 
\begin{cases}
1,\, d(x,y) = 1 \\
-d,\, y = x \\
0,\, \text{otherwise}
\end{cases}
\end{aligned}
\end{align}
where $d(x,y)$ denotes the Hamming distance between $x$ and $y$. This process has several nice properties that are critical for the design and analysis of our algorithm in section \ref{sc4}.
\paragraph*{Explicit formula for the transition kernel}
The special structure of the independent flipping process allows us to write the transition kernel explicitly by the following proposition:
\begin{proposition} \label{explicit-transition}
Let $P_{x,y}(s,t)$ be the transition kernel of the CTMC with the generator $Q$ given in \eqref{OU-generator}. Let $g_w(t)$ be the discrete heat kernel
$$  g_w(t) = \frac{1}{2^d} \prod_{i=1}^d (1+(-1)^{w_i}e^{-2t}) ,\quad \forall w \in \{0,1\}^d.       $$
Then we have
$P_{x,y}(s,t) = g_{y-x}(t-s)$, where the $y-x$ should be understood in modulo 2 sense.
\end{proposition}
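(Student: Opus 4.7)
My plan is to exploit the product structure coming from independence of the flips across coordinates. Because $Q$ flips each coordinate independently at rate $1$, the generator can be written as a tensor sum $Q = \sum_{i=1}^d I \otimes \cdots \otimes Q^{(1)} \otimes \cdots \otimes I$, where $Q^{(1)} = \bigl(\begin{smallmatrix} -1 & 1 \\ 1 & -1 \end{smallmatrix}\bigr)$ sits in the $i$-th tensor slot and $I$ is the $2\times 2$ identity in the others. The summands commute, so
$$ e^{tQ} \;=\; \bigotimes_{i=1}^d e^{tQ^{(1)}}, $$
which at the level of entries gives the product formula $P_{x,y}(0,t) = \prod_{i=1}^d P^{(1)}_{x_i,y_i}(0,t)$. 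Since $Q$ is time-homogeneous, $P_{x,y}(s,t) = P_{x,y}(0,t-s)$, so it suffices to identify $P^{(1)}(0,t)$ and then multiply.

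Next, I would diagonalize the single-coordinate generator. The eigenvalues of $Q^{(1)}$ are $0$ (with eigenvector $(1,1)^\top$, corresponding to the stationary distribution $(\tfrac12,\tfrac12)$) and $-2$ (with eigenvector $(1,-1)^\top$). Expanding in this basis gives
$$ e^{tQ^{(1)}} \;=\; \tfrac{1}{2}\begin{pmatrix} 1 & 1 \\ 1 & 1 \end{pmatrix} + \tfrac{e^{-2t}}{2}\begin{pmatrix} 1 & -1 \\ -1 & 1 \end{pmatrix} \;=\; \tfrac{1}{2}\begin{pmatrix} 1+e^{-2t} & 1-e^{-2t} \\ 1-e^{-2t} & 1+e^{-2t} \end{pmatrix}, $$
so in a uniform formula $P^{(1)}_{a,b}(0,t) = \tfrac{1}{2}\bigl(1+(-1)^{b-a}e^{-2t}\bigr)$, where $b-a$ is taken mod $2$.

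Substituting into the product formula yields
$$ P_{x,y}(0,t) \;=\; \prod_{i=1}^d \tfrac{1}{2}\bigl(1+(-1)^{(y-x)_i}e^{-2t}\bigr) \;=\; g_{y-x}(t), $$
with $y-x$ understood componentwise mod $2$. Combined with time-homogeneity this gives $P_{x,y}(s,t) = g_{y-x}(t-s)$, as claimed.

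There is no serious obstacle here: the only point that deserves care is the rigorous justification of the tensor-product factorization. That can either be done operator-theoretically as above, or by directly checking that the proposed kernel $g_{y-x}(t-s)$ satisfies the Kolmogorov forward equation $\partial_t P(s,t) = P(s,t) Q$ with initial condition $P(s,s)=I$ and then invoking uniqueness; since each factor $\tfrac12(1+(-1)^{w_i}e^{-2t})$ differentiates cleanly and the generator only couples Hamming-distance-$1$ pairs, this verification is a short direct computation.
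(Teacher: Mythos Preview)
Your proposal is correct and follows essentially the same route as the paper: both exploit the tensor-sum decomposition $Q=\sum_i I\otimes\cdots\otimes Q^{(1)}\otimes\cdots\otimes I$ to factor $e^{(t-s)Q}=\bigl(e^{(t-s)Q^{(1)}}\bigr)^{\otimes d}$, compute the $2\times 2$ exponential explicitly, and read off $P_{x,y}(s,t)=g_{y-x}(t-s)$. The only cosmetic difference is that you spell out the diagonalization of $Q^{(1)}$ and mention the alternative Kolmogorov-equation verification, whereas the paper simply states the $2\times 2$ exponential.
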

\begin{proof}
By utilizing the tensor-product structure of $Q$, we can write the generator $Q$ as
$       Q = \sum_{i=1}^d Q_i,$
where $Q_i$ represents a tensor product of 
$d$ matrices of size $2\times 2$. In this tensor product, the matrix $A :=  \begin{pmatrix} -1 & 1  \\ 1 & -1   \end{pmatrix}$ occupies the $i$-th position, while the remaining positions are all identity. By the Kolmogorov forward equation, we have
\begin{align*} P(s,t) & =  e^{(t-s)Q} =  (e^{(t-s)A})^{\otimes d} \\ & = \begin{pmatrix} \frac{1+e^{-2(t-s)}}{2} & \frac{1-e^{-2(t-s)}}{2}  \\ \frac{1-e^{-2(t-s)}}{2} & \frac{1+e^{-2(t-s)}}{2}   \end{pmatrix}^{\otimes d},
\end{align*}
giving the expression $P_{x,y}(s,t) = g_{y-x}(t-s)$.
\end{proof}
This explicit formula of the transition kernel makes the denoising score entropy \eqref{DSE} tractable and will play an important role in our algorithm proposed in section \ref{sc4} for simulating the reversed process. 
\paragraph*{Convergence of the forward process}
In the context of the diffusion model on $\RR^d$, the OU forward process is known to converge exponentially to standard Gaussian, regardless of the complexity of the data distribution \cite{SamplingEasy,chen2023improved}. Similarly, in our discrete setting, the independent flipping process exhibits an exponential convergence to the uniform distribution over $\{0,1\}^d$. We write this formally in the following proposition:
\begin{proposition} \label{converge-forward}
Let $p(t)$ be the distribution of the forward process with the generator $Q$ given in \eqref{OU-generator} at time $t$. Let $\gamma$ denote the uniform distribution over $\{0,1\}^d$. We have
$$ \KL(p(T) \| \gamma)   \le  e^{-T} \KL(p(0) \| \gamma) \lesssim de^{-T}           $$
\end{proposition}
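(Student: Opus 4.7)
The plan is to prove the two inequalities separately: an exponential decay of KL along the process, and a dimension-dependent bound on the initial KL.

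For the exponential decay, I would use an entropy-dissipation argument. Since $Q$ is symmetric (as seen from \eqref{OU-generator}) and $\gamma$ is uniform, the chain is reversible with respect to $\gamma$. Setting $f_t:=p(t)/\gamma$, the Kolmogorov forward equation rewrites as $\dot f_t = Qf_t$, and a short symmetrization using reversibility yields
\[
\frac{\d}{\d t}\KL(p(t)\|\gamma) = -\frac{1}{2}\sum_{d(x,y)=1}\gamma(x)(f_t(y)-f_t(x))(\log f_t(y)-\log f_t(x)) =: -\mathcal{E}(f_t,\log f_t).
\]
I would then invoke a modified log-Sobolev inequality (MLSI) of the form $\KL(p\|\gamma)\le \mathcal{E}(f,\log f)$; combining with Gr\"onwall gives $\KL(p(t)\|\gamma)\le e^{-t}\KL(p(0)\|\gamma)$.

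To establish the MLSI on the $d$-cube, I would invoke the tensorization principle. Since $Q=\sum_i Q_i$ decomposes as a sum of independent single-coordinate generators and $\gamma$ is a product of uniform 2-state marginals, both the Dirichlet form $\mathcal{E}(f,\log f)$ and the relative entropy tensorize into averages over the coordinates (subadditivity of entropy for product references). This reduces the problem to an MLSI with constant $1$ for the symmetric 2-state chain with flip rate $1$, which in turn reduces to the one-variable inequality $(2q-1)\log\frac{q}{1-q}\ge q\log(2q)+(1-q)\log(2(1-q))$ for $q\in[0,1]$. The latter is straightforward to verify by calculus: both sides vanish at $q=1/2$, and comparison of derivatives establishes the inequality globally.

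For the second bound, since $\gamma$ is uniform on $\{0,1\}^d$, I have
\[
\KL(p(0)\|\gamma)=\sum_x p(0)(x)\log\bigl(2^d\, p(0)(x)\bigr) = d\log 2 - H(p(0)) \le d\log 2 \lesssim d,
\]
using nonnegativity of Shannon entropy $H$.

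The main obstacle is verifying the MLSI constant for the 2-state factor; however, the rate claimed ($e^{-T}$) is far from sharp---the sharp one-factor decay rate of relative entropy is actually $e^{-4t}$---so any crude version of the scalar inequality is more than enough. An alternative would bypass MLSI entirely by invoking strong data-processing contraction of KL under the product noise channel defined by the kernel in Proposition \ref{explicit-transition}, but the MLSI route keeps the proof self-contained.
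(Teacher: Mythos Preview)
Your proposal is correct. The paper's own proof is much terser: for the first inequality it simply cites the classical log-Sobolev inequality with constant $2$ for the hypercube from a reference, and then cites another reference for the standard fact that LSI implies exponential decay of relative entropy along the semigroup; the second inequality is handled exactly as you do, by writing $\KL(p(0)\|\gamma)=d\log 2-H(p(0))\le d\log 2$.

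Your route differs in that you work out the entropy-dissipation identity $\frac{\d}{\d t}\KL=-\mathcal{E}(f_t,\log f_t)$ explicitly and then invoke the \emph{modified} log-Sobolev inequality rather than the classical one, reducing it via tensorization to the two-state chain and checking the scalar inequality $(2q-1)\log\frac{q}{1-q}\ge q\log(2q)+(1-q)\log(2(1-q))$ by calculus. This is a legitimate and closely related alternative: classical LSI implies MLSI, and both give the same Gr\"onwall conclusion. What your approach buys is self-containedness---no black-box citations are needed---at the cost of a few more lines. What the paper's approach buys is brevity. Your remark that the constant is far from sharp (the true single-bit MLSI constant is $4$, yielding $e^{-4t}$) is also correct and worth noting, though irrelevant for the proposition as stated.
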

The proof is deferred to Appendix \ref{proof-convergenceforward}. Proposition \ref{converge-forward} suggests that to make the error from the insufficient mixing of the forward process less than $\epsilon$, i.e., $\KL(p_T\| \gamma) \le \epsilon$, we only need to simulate the forward process for time $T = \log \left(\frac{d}{\epsilon} \right)$, which depends on $d$ and $1/\epsilon$ logarithmly.

\paragraph*{The Sparse Structure}
Since transitions only occur between neighbors on the hypercube each time, we can use a more concise way to represent the score function
$$    c_x(t) =\left[\frac{p_{x+e_1}}{p_x},  \frac{p_{x+e_2}}{p_x},\cdots,  \frac{p_{x+e_d}}{p_x}\right]^\top \in \RR^d.              $$
Here, $e_i$ denotes the vector with a 1 in the $i$-th coordinate and 0's elsewhere, and the addition operator is defined modulo 2. Similarly, we use $s_x(t)$ to denote the score estimator. 

Note that for a state space of size $2^d$, the discrete score function, defined by the ratio of probabilities, is generally a function $\{0,1\}^d \times \{0,1\}^d \to \RR $. However, by leveraging the sparse structure of the transitions, we can simplify this to a $\{0, 1\}^d  \times d \to \RR$ function. This simplification enables more efficient computation when learning the discrete score function. Furthermore, as we will discuss in Section \ref{sc4}, this sparse structure greatly facilitates the algorithmic implementation of the reversed process.

\section{Main Results} \label{sc4} 
In this section, we present our algorithm and the analysis for implementing the sampling dynamic, which is a reversed process corresponding to the independent flipping forward process, with the discrete score function replaced by the score estimator. Because of \eqref{OU-generator}, the sampling dynamic is given by a CTMC with the generator:
\begin{align} \label{sampling-generator}
\begin{aligned}
    \hat{Q}_{x,y}^\leftarrow(t)= 
\begin{cases}
s_{x}(t)_i ,\, y = x+e_i \\
-\sum_{i=1}^d s_{x}(t)_i,\, y = x \\
0,\, \text{otherwise} 
\end{cases}                     .
\end{aligned}
\end{align}           
\subsection{Assumptions}
To begin with, we introduce some assumptions related to the learning of the discrete score function. Firstly, we assume the score estimator is $\epsilon$-accurate in the score entropy loss.
\begin{assumption} \label{as1}
For some $\epsilon,\delta > 0$, the score estimator $s$ satisfies
$$  \cL(s) = \frac{1}{T-\delta} \int_\delta^T \EE_{X_t \sim p(t)} \ell( c_{X_t}(t), s_{X_t}(t)) \d t   \le \epsilon,    $$
where $\ell: \RR^d \times \RR^d \to \RR$ is the Bregman distance w.r.t. the entropy function $h(x) = \sum_{i=1}^d x_i \log x_i$ given by
\begin{align*} \ell(c,s) = c - s - \langle \nabla h(s), s-c \rangle  = \sum_{i=1}^d \left(-c_i+s_i + c_i\log \frac{c_i}{s_i}\right)\end{align*}
\end{assumption}
 Assumption \ref{as1} is an analogy of the $L^2$-accuracy assumption that is widely used in the theoretical works for the diffusion models in the SDE framework \cite{convergence-score, convergencescore2, SamplingEasy, chen2023improved, li2023faster, benton2023linear}. There are, however, two primary distinctions between Assumption \ref{as1} and the corresponding assumption for diffusion models on $\RR^d$.
 First, in our discrete setting, the $L^2$ distance is replaced by the Bregman distance. This choice aligns with the structure of the CTMC framework, where the KL divergence between path measures is characterized by the Bregman distance (See Proposition \ref{pathKL}) rather than the $L^2$ distance for the SDE framework. Second, we assume the score estimation error to be small on average over the time interval $[\delta, T]$, in contrast to the assumption in continuous diffusion models, which assume a small average error over specific discretization points. This difference arises because our algorithm employs the score estimator at randomly sampled times rather than at a fixed set of discretization points. Both assumptions are reasonable because one can sample times either uniformly or from a discrete set during the stochastic gradient descent in the training process.
 
It is important to notice that the term $\cL(s)$ in Assumption \ref{as1} is equivalent to (up to a term independent of $s$) the objective functions used in the learning procedure, including the implicit score entropy and the denoising score entropy discussed in section \ref{sc2.1}. Thus, Assumption \ref{as1} is satisfied if the objective function is optimized to the extent that its function value is $\epsilon$ close to the minimum value.

The second assumption involves the uniform boundedness of the score estimator $s$. This leads to a bounded transition rate in the sampling dynamics, thereby enabling the algorithmic application of the uniformization technique. This assumption relies on the observation that the probability ratio of the forward process (or the true discrete score function) is uniformly bounded.
\begin{proposition} \label{bound-score}
Let $p(t)$ be the distribution of the forward CTMC with generator $Q$ given in \eqref{OU-generator}. For $t>0$, the probability ratio between two neighbor states is bounded by
$$\frac{p_{x+e_i}(t)}{p_x(t)} \lesssim \max\left(\frac{1}{t},1 \right),\quad \forall x \in \{0,1\}^d,\,i \in \{1,\cdots,d\}.$$
\end{proposition}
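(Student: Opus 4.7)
The plan is to use the explicit formula for the transition kernel given in Proposition~\ref{explicit-transition}, exploit its tensor-product structure to reduce everything to a one-coordinate computation, and then bound the resulting scalar ratio.

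First I would rewrite the heat kernel as a product: setting $a(t) = \tfrac{1+e^{-2t}}{2}$ and $b(t) = \tfrac{1-e^{-2t}}{2}$, Proposition~\ref{explicit-transition} gives
\begin{align*}
g_w(t) \;=\; \prod_{j=1}^d \frac{1+(-1)^{w_j}e^{-2t}}{2} \;=\; a(t)^{d-|w|}\,b(t)^{|w|}.
\end{align*}
In particular, for any $x_0$, comparing the $i$-th factor of $g_{x-x_0}(t)$ and $g_{x+e_i-x_0}(t)$ shows that flipping coordinate $i$ of the argument swaps an $a(t)$ for a $b(t)$ or vice versa:
\begin{align*}
\frac{g_{x+e_i-x_0}(t)}{g_{x-x_0}(t)} \;=\;
\begin{cases} b(t)/a(t) & \text{if } (x-x_0)_i = 0,\\ a(t)/b(t) & \text{if } (x-x_0)_i = 1.\end{cases}
\end{align*}

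Next I would use $p_x(t) = \sum_{x_0} p_0(x_0)\,g_{x-x_0}(t)$ and split the sum according to whether $(x-x_0)_i = 0$ or $1$. Writing
\begin{align*}
A \;=\; \sum_{x_0:\,(x-x_0)_i = 0} p_0(x_0)\,g_{x-x_0}(t), \qquad B \;=\; \sum_{x_0:\,(x-x_0)_i = 1} p_0(x_0)\,g_{x-x_0}(t),
\end{align*}
we get $p_x(t) = A+B$ and, by the ratio computation above,
\begin{align*}
p_{x+e_i}(t) \;=\; \frac{b(t)}{a(t)}\,A \;+\; \frac{a(t)}{b(t)}\,B.
\end{align*}
Since $a(t) \ge b(t)$, this is a convex combination bounded by $a(t)/b(t)$, so
\begin{align*}
\frac{p_{x+e_i}(t)}{p_x(t)} \;\le\; \frac{a(t)}{b(t)} \;=\; \frac{1+e^{-2t}}{1-e^{-2t}}.
\end{align*}

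Finally I would verify the elementary bound $\tfrac{1+e^{-2t}}{1-e^{-2t}} \lesssim \max(1/t, 1)$: for $t \ge 1$ the right-hand side is at most the constant $\tfrac{1+e^{-2}}{1-e^{-2}}$, and for $0 < t \le 1$ we use $1 - e^{-2t} \ge (1-e^{-2})\,t$ together with $1+e^{-2t} \le 2$ to get an $O(1/t)$ bound. There is no real obstacle here; the only conceptually nontrivial step is noticing the convex-combination structure that allows the bound to be taken pointwise in $x_0$, which is exactly what makes the estimate dimension-free.
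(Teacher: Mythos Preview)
Your proof is correct and follows essentially the same route as the paper: both arguments write $p_{x+e_i}(t)/p_x(t)$ as a weighted average (the paper phrases it as an expectation under the posterior $\tilde p(a\mid x)$, you phrase it as a convex combination after splitting on $(x-x_0)_i$) of the pointwise ratios $g_{w+e_i}(t)/g_w(t)$, each of which is bounded by $\frac{1+e^{-2t}}{1-e^{-2t}}$, and then handle the final scalar bound identically.
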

\begin{proof}
Let $g$ be the heat kernel defined in Proposition \ref{explicit-transition}. Consider the conditional distribution $\tilde{p}(a|x) \propto p_a(0)g_{x-a}(t)$ that is the distribution of $X_t$ conditional on $X_0$. Then we write the probability ratio as (in the following computation, all the addition is defined modulo 2):
\begin{align}
 \frac{p_{x+e_i}(t)}{p_x(t)} & = \frac{\sum_{a+w=x+e_i}p_a(0)g_w(t) }{\sum_{a+w=x}p_a(0)g_w(t)} \notag \\
 &  = \frac{\sum_{a+w=x}p_a(0)g_{w+e_i}(t) }{\sum_{a+w=x}p_a(0)g_w(t)} \notag \\
 &  =  \frac{\sum_{a+w=x}p_a(0)\frac{g_{w+e_i}(t)}{g_w(t)}g_w(t) }{\sum_{a+w=x}p_a(0)g_w(t)} \label{000}
\end{align}
By the definition of $g$, we can compute the ratio  $\frac{g_{w+e_i}(t)}{g_w(t)} = \frac{1-(-1)^{w_i}e^{-2t}}{1+(-1)^{w_i}e^{-2t}}$. Substitute this to \eqref{000}, we obtain
\begin{align*}
\frac{p_{x+e_i}(t)}{p_x(t)}  &  = \frac{\sum_{a+w=x}p_a(0) \frac{1-(-1)^{w_i}e^{-2t}}{1+(-1)^{w_i}e^{-2t}} g_w(t) }{\sum_{a+w=x}p_a(0)g_w(t)} \\
 & = \frac{\sum_{a}p_a(0) \frac{1-(-1)^{x_i-a_i}e^{-2t}}{1+(-1)^{x_i-a_i}e^{-2t}} g_{x-a}(t) }{\sum_{a}p_a(0)g_{x-a}(t)} \\
 & =  \EE_{\tilde{p}(a|x)}\left[\frac{1-(-1)^{x_i-a_i}e^{-2t}}{1+(-1)^{x_i-a_i}e^{-2t}} \right].\\
 & \lesssim \frac{1+e^{-2t}}{1-e^{-2t}}
\end{align*}
 Since $\frac{1+e^{-2t}}{1-e^{-2t}} \lesssim \max\left(1,\frac{1}{t}\right)$, we complete the proof.
\end{proof}
Note the bound in Proposition \ref{bound-score} is independent of the data distribution; thus, we do not need any assumption on the structure of data distribution. 

Based on Proposition \ref{bound-score}, it is natural to assume the score estimator is also uniformly bounded. 
\begin{assumption}\label{as2}
We assume there is a universal constant $C$ such that
$$ \sum_{i=1}^d s_x(t)_i \le Cd\max\left(1,\frac{1}{t}\right).  $$
\end{assumption}
In practice, the constraint in Assumption \ref{as2} can be imposed by 
slightly modifying the learned score function after training. For example, one can add a sigmoid layer before the output for the score estimator to satisfy the desired bound. This modification will not affect the accuracy of the learned score function according to Proposition \ref{bound-score}.
\subsection{Algorithm}
We provide an algorithm, detailed in Algorithm \ref{alg} that exactly simulates the sampling dynamic. Our algorithm is based on the uniformization technique stated in Proposition \ref{uniformization}. Note Proposition \ref{uniformization} requires the uniform boundness condition for the transition rates, which is not satisfied by the generator $\hat{Q}^\leftarrow$ in general. To address this, our algorithm implements early stopping with a terminal time of 
$T-\delta$. Then Assumption \ref{as2} ensures the bound. Moreover, since the bound in Assumption \ref{as2} varies over time, we apply the procedure from Proposition \ref{uniformization} with adaptive $\lambda$. Specifically, we introduce a partition $0 = t_0 < t_1 < \cdots < t_N =T-\delta$ and set different $\lambda_k$ values in different time intervals. Combining all these ingredients leads to the formulation of Algorithm \ref{alg}. The algorithm outputs a sample from the distribution $\hat{p}^\leftarrow(T-\delta)$, where $\hat{p}^\leftarrow(t)$ represents the distribution of the CTMC with the generator defined in \eqref{sampling-generator} at time $t$.
\begin{algorithm}[tb]
   \caption{Exact algorithm for implementing the sampling dynamic}
   \label{alg}
\begin{algorithmic}
   \STATE {\bfseries Input:} A learned score function $s$ that satisfies Assumption \ref{as2} with a constant $C$, total time $T$, a time partition $0 = t_0 < t_1 < \cdots < t_N = T-\delta$, parameters $\lambda_1,\lambda_2,\cdots,\lambda_N $.
   \STATE {\bfseries Output:} A sample from $\hat{p}^\leftarrow(T-\delta)$.
   \STATE Draw $Y_0 \sim \mathrm{Unif}(\{0,1\}^d)$.
   \FOR{$k=0$ {\bfseries to} $N-1$}
   \STATE Draw $M \sim \mathrm{Poisson}(\lambda_{k+1}(t_{k+1}-t_k))$
   \STATE Sample $M$ points i.i.d. from $\mathrm{Unif}([t_{k},t_{k+1}])$ and sort them as $\tau_1 < \tau_2 \cdots < \tau_{M}$
   \STATE Set $Z_0 = Y_k$
   \STATE \FOR{$j=0$ {\bfseries to} $M-1$}
   \STATE \STATE Set $Z_{j+1} = \begin{cases} 
   Z_{j}+e_i,\, & \text{w.p.}\, \frac{s_{Z_{j}}(\tau_j)_i}{\lambda_{k+1}},\,1\le i \le d \\
    Z_{j},\,    & \text{w.p.}\, 1- \sum_{i=1}^d \frac{s_{Z_{j}}(\tau_j)_i}{\lambda_{k+1}}
    \end{cases}$  
   \STATE \ENDFOR
   \STATE Set $ Y_{k+1}=Z_M    $
   \ENDFOR
   \STATE {\bfseries return} $Y_N$
\end{algorithmic}
\end{algorithm}

\subsection{Analysis of the Algorithm}
In this section, we provide a theoretical analysis of Algorithm \ref{alg} on its complexity and error. In particular,  we are interested in the following two aspects of Algorithm \ref{alg}:
\begin{itemize}
\item The algorithm complexity. In other words, how many steps are required to implement Algorithm \ref{alg}?
\item The error of the resulting distribution, i.e., the distance between the data distribution $p(0)$ and the distribution $\hat{p}^\leftarrow(T-\delta)$ obtained by Algorithm \ref{alg}
\end{itemize}
\paragraph*{Results for general data distribution}
Firstly, we consider the setting that $p(0)$ is a general data distribution.
We summarize the result in the following theorem:
\begin{theorem}\label{general-bound}
Suppose Assumption \ref{as2} holds. By choosing the time partition such that \begin{align}\label{time-partition} t_{k+1} - t_k \le c(T-t_{k+1}),\, \forall 0\le k \le N-1\end{align} for some absolute constant $c$ and choosing $\lambda_k$'s by 
$$\lambda_k =  \frac{Cd}{\min(1,T-t_{k})},$$
the implementation of Algorithm \ref{alg} requires $M \sim \mathrm{Poisson}(\lambda)$ steps with $\lambda = O(d(\log(1/\delta)+T))$. Moreover, if we further assume Assumption \ref{as1}, Algorithm \ref{alg} outputs a distribution $\hat{p}^\leftarrow(T-\delta)$ such that
\begin{enumerate}
\item The KL divergence between $\hat{p}^\leftarrow(T-\delta)$ and $p(\delta)$ is bounded by
$$  \KL(p(\delta) \| \hat{p}^\leftarrow(T-\delta)) \lesssim de^{-T} + T\epsilon,            $$
Choosing $T \asymp \log(d/\epsilon) $ makes this $\tilde{O}(\epsilon)$ and $\lambda = O(d\log(d/\epsilon \delta))$.
\item \label{TVbound} The TV distance between $\hat{p}^\leftarrow(T-\delta)$ and the data distribution is bounded by
$$\TV(p(0), \hat{p}^\leftarrow(T-\delta)))  \lesssim \sqrt{de^{-T} + T\epsilon} + (1-e^{-d\delta}).           $$
Choosing $T \asymp \log(d/\epsilon),\,\delta \asymp \frac{\sqrt{\epsilon}}{d}$ makes the error $\tilde{O}(\sqrt{\epsilon})$, and $\lambda = O(d\log(d/ \epsilon^{3/4})) $.
\end{enumerate}
\end{theorem}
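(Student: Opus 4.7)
The statement bundles three claims: a step-count bound, a KL guarantee, and a TV guarantee. The step count is a Poisson superposition calculation. Because the inner loops are independent and the transition count in $[t_k,t_{k+1}]$ is $\mathrm{Poisson}(\lambda_{k+1}(t_{k+1}-t_k))$, the total is $\mathrm{Poisson}(\lambda)$ with
\[ \lambda = \sum_{k=0}^{N-1} \lambda_{k+1}(t_{k+1}-t_k) = Cd \sum_{k=0}^{N-1}\frac{t_{k+1}-t_k}{\min(1,T-t_k)}. \]
I would split the sum at the first index where $T-t_k$ drops below $1$. The early part contributes at most $CdT$ trivially. On the late part the partition condition $t_{k+1}-t_k\le c(T-t_{k+1})$ forces the sequence $(T-t_k)$ to contract geometrically, and moreover $\frac{1}{T-t_k}\le \frac{1+c}{T-t}$ for $t\in[t_k,t_{k+1}]$, so each late summand is dominated by $(1+c)\int_{t_k}^{t_{k+1}}\frac{\d t}{T-t}$, which telescopes to $O(\log(1/\delta))$. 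Hence $\lambda=O(d(T+\log(1/\delta)))$.

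For the KL claim, Proposition \ref{uniformization}, applied piecewise on each subinterval with the adaptive bound $\lambda_{k+1}$, certifies that Algorithm \ref{alg} samples \emph{exactly} from the law of the CTMC with generator $\hat{Q}^\leftarrow$ run from $\gamma$ for time $T-\delta$; there is no discretization error to control. I would then apply Proposition \ref{pathKL} on the reverse-time window $[0,T-\delta]$, equivalently the forward-time window $[\delta,T]$, to get
\[ \KL\bigl(\PP^\leftarrow_{[\delta,T]} \,\|\, \hat{\PP}^\leftarrow_{[0,T-\delta]}\bigr) = \KL(p(T)\|\gamma) + \int_\delta^T \EE_{X_t \sim p(t)}\,\ell(c_{X_t},s_{X_t})\,\d t, \]
and bound the first term by $de^{-T}$ via Proposition \ref{converge-forward} and the integral by $(T-\delta)\epsilon\le T\epsilon$ via Assumption \ref{as1}. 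The data-processing inequality transfers this bound to the endpoint marginals $p(\delta)$ and $\hat{p}^\leftarrow(T-\delta)$.

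The TV claim follows from
\[ \TV(p(0),\hat{p}^\leftarrow(T-\delta)) \le \TV(p(0),p(\delta)) + \TV(p(\delta),\hat{p}^\leftarrow(T-\delta)), \]
where Pinsker applied to the previous KL estimate controls the second summand by $\sqrt{de^{-T}+T\epsilon}$ up to constants, and for the first I would use a synchronous coupling along the forward CTMC: each coordinate carries an independent rate-$1$ exponential clock, so the probability that no clock rings in $[0,\delta]$ is $e^{-d\delta}$, and on that event $X_\delta=X_0$, giving $\TV(p(0),p(\delta))\le 1-e^{-d\delta}$. The parameter choices quoted in items 1 and 2 then emerge by balancing the two dominant summands.

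\textbf{Main obstacle.} The KL and TV halves reduce cleanly to Propositions \ref{uniformization}, \ref{pathKL}, and \ref{converge-forward} together with Pinsker and data processing, so they are essentially mechanical. The delicate piece is the complexity estimate: one must show that the geometric partition condition is precisely what converts the singular weight $1/(T-t)$ on $[T-1,T-\delta]$ into a Riemann-sum approximation of $\int\frac{\d t}{T-t}=\log(1/\delta)$. Making the telescoping work with the adaptive $\lambda_k$ without losing factors is where the real bookkeeping lies.
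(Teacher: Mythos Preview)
Your proposal is correct and matches the paper's proof essentially line for line: the same split of the Poisson parameter at $T-t_k\approx 1$ with the Riemann-sum/integral comparison for the singular part, the same use of exact simulation plus Proposition \ref{pathKL} and Proposition \ref{converge-forward} for the KL bound (the paper phrases the data-processing step as ``the KL between marginals is bounded by the KL between path measures''), and the same coupling $(X_0,X_\delta)$ plus Pinsker for the TV bound. One minor slip: in your displayed $\lambda$, the denominator should be $\min(1,T-t_{k+1})$ rather than $\min(1,T-t_k)$ since the algorithm uses $\lambda_{k+1}$ on $[t_k,t_{k+1}]$; but the $(1+c)$ factor you invoke is exactly what handles the correct denominator, so the argument is unaffected.
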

\begin{remark}
Note that Poisson distribution satisfies super-exponential decaying tails, so $O(d\log(1/\lambda))$ steps are enough with high probability and $\lambda = O(d\log(d/\epsilon \delta))$ or $\lambda = O(d\log(d/ \epsilon^{3/4})) $ describes the complexity of the algorithm. 
\end{remark}
\begin{remark}
The choice of the time partition \eqref{time-partition} is quite flexible. This is because we only care about the total number of transitions rather than the number of intervals $N$
\end{remark}
The proof is deferred to Appendix \ref{proof-generalbound}
Our nearly linear dependence on $d$ and logarithm dependence on $\delta$ match the best result for the continuous diffusion model \cite{benton2023linear}.
\paragraph*{Results for data distribution with uniformly bounded score}
In addition, if we further assume the discrete score function of the data distribution(i.e., the probability ratio between neighbor states) is uniformly bounded, we can simulate the sampling dynamic without early stopping(i.e.,$\delta = 0$). In this case, we can improve the TV distance bound between the data distribution and the sampled distribution in Theorem \ref{general-bound}(\ref{TVbound}) to a KL divergence bound.
\begin{assumption} \label{as3}
Suppose the data distribution $p(0)$ satisfies  
          $$\frac{p_{x+e_i}(0)}{p_x(0)} \leq L,\quad \forall x \in \{0,1\}^d,\,i \in \{1,\cdots,d\}.$$
\end{assumption}
\begin{theorem} \label{bounded-ratio}
Under Assumption \ref{as1},\ref{as2},\ref{as3}, let $\delta = 0,\,T \asymp \log(d/\epsilon)$, by choosing the time partition $ 0 = t_1 < \cdots < t_N = T$ and parameter $\lambda_k$'s appropriately in algorithm \ref{as1}, one can obtain a distribution $\hat{p}^{\leftarrow}(T)$ that is $\tilde{O}(\epsilon)$ close to the data distribution within $\mathrm{Poisson}(O(d\log(dL/\epsilon)))$ steps.
\end{theorem}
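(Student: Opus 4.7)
The plan is to exploit Assumption \ref{as3} to remove the early stopping. The first step is to upgrade Proposition \ref{bound-score} to a uniform-in-$t$ bound: the representation
\[
\frac{p_{x+e_i}(t)}{p_x(t)} = \frac{\sum_a p_{a+e_i}(0)\, g_{x-a}(t)}{\sum_a p_a(0)\, g_{x-a}(t)}
\]
together with $p_{a+e_i}(0) \le L\,p_a(0)$ from Assumption \ref{as3} gives $c_x(t)_i \le L$ for every $t \ge 0$, not just for $t$ away from $0$. The Bregman distance $\ell(c,\cdot)$ is convex in its second argument and minimized at $s = c$, and since $c_x(t)_i \le L$, clipping each $s_x(t)_i$ at $L$ can only decrease the integrand of $\cL(s)$ pointwise; hence Assumption \ref{as1} is preserved, and the clipped estimator enjoys the pair of bounds
\[
\sum_{i=1}^d s_x(t)_i \;\le\; \min\!\left(Cd\max\left(1,\tfrac{1}{t}\right),\; dL\right).
\]

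With these bounds valid on all of $[0,T]$, Proposition \ref{uniformization} lets Algorithm \ref{alg} run with $\delta = 0$. For the resulting error, Proposition \ref{pathKL} together with Proposition \ref{converge-forward}, Assumption \ref{as1} (integrated over $[0,T]$), and the data-processing inequality yields
\[
\KL(p(0)\,\|\,\hat{p}^\leftarrow(T)) \;\le\; \KL(p(T)\|\gamma) + T\,\cL(s) \;\lesssim\; de^{-T} + T\epsilon,
\]
and the choice $T \asymp \log(d/\epsilon)$ makes this $\tilde{O}(\epsilon)$, as claimed.

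The main work is the complexity bound, whose objective is to land at $d\log L$ rather than $dL$. I would split the reversed-time interval $[0,T]$ into three phases and pick $\lambda_k$ as the smaller of the two bounds on each phase: on $[0,T-1]$ use a uniform partition with $\lambda_k = Cd$, contributing $O(dT)$ transitions; on $[T-1,\,T-1/L]$ use $\lambda_k \asymp Cd/(T-t_k)$ together with a partition satisfying \eqref{time-partition}, which geometrically refines toward $T$ and converts the $1/(T-t)$ integrand into a $\log$, contributing $O(d\log L)$; and on $[T-1/L,\,T]$ use $\lambda_k = CdL$ with $O(1)$ intervals of total length $1/L$, contributing $O(d)$. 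Summing gives a total Poisson parameter of $O(dT + d\log L) = O(d\log(dL/\epsilon))$, and since Poisson tails are super-exponential, this also bounds the actual number of steps with high probability.

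The main obstacle is precisely the middle phase. Using only the $1/t$ bound of Assumption \ref{as2} would force $\lambda \asymp dL$ over an interval of length $\Theta(1)$ and produce a linear-in-$L$ complexity, while using only the $L$ bound of Assumption \ref{as3} would force $\lambda \asymp dL$ over all of $[0,T]$ and blow up even worse. The logarithmic savings relies on interleaving the two bounds, invoking the geometric-type partition from the proof of Theorem \ref{general-bound} on the subinterval $[T-1,\,T-1/L]$ and capping with the uniform bound $dL$ only on the short tail $[T-1/L,\,T]$.
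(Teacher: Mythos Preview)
Your proposal is correct and follows essentially the same route as the paper: the key lemma that $c_x(t)_i \le L$ for all $t\ge 0$ is proved exactly as you indicate, and the three-phase split of $[0,T]$ with $\lambda_k \asymp Cd$ on $[0,T-1]$, $\lambda_k \asymp Cd/(T-t_k)$ on $[T-1,T-1/L]$, and $\lambda_k \asymp dL$ on $[T-1/L,T]$ is precisely the paper's partition, yielding the same $O(d(T+\log L))$ Poisson parameter. Your explicit clipping argument to transfer the $L$-bound from $c$ to $s$ without increasing $\cL(s)$ is a point the paper leaves implicit, so if anything you are being more careful there.
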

The proof is deferred to Appendix \ref{proof-boundedratio}. Notably, the algorithm complexity only has a logarithm dependence on $L$, so we can get a reasonable guarantee even if $L$ has an exponential dependence on $d$.  
 \paragraph*{Discussion on the near optimality}
Note that for the true reversed process, the expectation value of the intensity of each clock is given by
\begin{align*}
\EE \sum_{i=1}^d \frac{p_{x+e_i}(t)}{p_x(t)} &= \sum_{x \in \{0,1\}^d} \sum_{i=1}^d p_{x+e_i}(t) \\
& =  \sum_{i=1}^d \sum_{x\in \{0,1\}^d} p_{x+e_i}(t) = d
\end{align*}
Consequently, the expected number of transitions within a time interval of length 1 is $\Theta(d)$. As a result, a linear dependence on $d$ is unavoidable for simulating the reversed process. 
\paragraph*{Comparison with the $\{0,1,\cdots,n\}^d$ setting} \cite{NEURIPS2022_b5b52876,lou2023discrete} considered the forward process on $\{1,\cdots,n\}^d$ with the generator $Q$ defined as follows: for two distinct states $x, y \in \{0, 1, \cdots, n\}^d$, define $Q_{x, y} = 1$ if and only if $x$ and $y$ differ in exactly one entry. In this case, the typical number of transitions that occur within a unit time interval is $\Theta(n^2 d)$ and the convergence of the forward process requires at least $\Omega(1)$ time. So $\Omega(n^2d)$ steps are required to simulate the reversed process. However, if we transform the data to the $\{0,1\}^{d\log(n)}$ structure and utilize our hypercube framework, the $n,d$ dependence of the algorithm complexity is reduced to $\tilde{O}(d\log(n)) $. Therefore, our hypercube framework provides a more efficient implementation of the discrete diffusion model.
\section{Conclusion}
In this paper, we consider an algorithm for discrete diffusion models based on uniformization and present the first theoretical analysis of the discrete diffusion model. Although our nearly linear dependence result aligns with the state-of-art result for diffusion models on $\RR^d$ and is nearly optimal in the current framework, there are several interesting further directions to explore:
\paragraph*{Faster algorithm with theoretical guarantee}
Our algorithm provides an exact simulation of the reversed process, where the number of transitions corresponds to the worst-case bound of the clock intensities. Although we believe that the $\Omega(d)$ complexity is not improvable in general, there may be potential in investigating an approach that simulates transitions adaptive to the clock of the current state. This might require some discretization and further analysis to quantify the discretization error.
\paragraph*{Improve the graph structure of the forward process} We consider the independent flipping process on the hypercube as the forward process. This process converges to the uniform distribution in $O(\log d)$ time and results in a reversed process that transitions $\Omega(d)$ times. A natural question is if one can employ a better structure for the design of the forward process so that the forward process still converges exponentially but the number of transition times is reduced. One idea is to apply the forward process on the Ramanujan graph \cite{Lubotzky2017RamanujanG}, but the numerical simulation will become hard.


\bibliographystyle{amsalpha}
\bibliography{ref}

\newcommand{\etalchar}[1]{$^{#1}$}
\providecommand{\bysame}{\leavevmode\hbox to3em{\hrulefill}\thinspace}
\providecommand{\MR}{\relax\ifhmode\unskip\space\fi MR }
\providecommand{\MRhref}[2]{%
  \href{http://www.ams.org/mathscinet-getitem?mr=#1}{#2}
}
\providecommand{\href}[2]{#2}
\begin{thebibliography}{SDWMG15}

\bibitem[AJH{\etalchar{+}}23]{austin2023structured}
Jacob Austin, Daniel~D. Johnson, Jonathan Ho, Daniel Tarlow, and Rianne van~den Berg, \emph{Structured denoising diffusion models in discrete state-spaces}, 2023.

\bibitem[And82]{Anderson1982ReversetimeDE}
Brian. D.~O. Anderson, \emph{Reverse-time diffusion equation models}, Stochastic Processes and their Applications \textbf{12} (1982), 313--326.

\bibitem[BBDD23]{benton2023linear}
Joe Benton, Valentin~De Bortoli, Arnaud Doucet, and George Deligiannidis, \emph{Linear convergence bounds for diffusion models via stochastic localization}, 2023.

\bibitem[BLM13]{concentration-book}
Stéphane Boucheron, Gábor Lugosi, and Pascal Massart, \emph{{Concentration Inequalities: A Nonasymptotic Theory of Independence}}, Oxford University Press, 02 2013.

\bibitem[BSB{\etalchar{+}}23]{benton2023denoising}
Joe Benton, Yuyang Shi, Valentin~De Bortoli, George Deligiannidis, and Arnaud Doucet, \emph{From denoising diffusions to denoising markov models}, 2023.

\bibitem[BTYLD23]{bartal2023multidiffusion}
Omer Bar-Tal, Lior Yariv, Yaron Lipman, and Tali Dekel, \emph{Multidiffusion: Fusing diffusion paths for controlled image generation}, 2023.

\bibitem[CBDB{\etalchar{+}}22]{NEURIPS2022_b5b52876}
Andrew Campbell, Joe Benton, Valentin De~Bortoli, Thomas Rainforth, George Deligiannidis, and Arnaud Doucet, \emph{A continuous time framework for discrete denoising models}, Advances in Neural Information Processing Systems (S.~Koyejo, S.~Mohamed, A.~Agarwal, D.~Belgrave, K.~Cho, and A.~Oh, eds.), vol.~35, Curran Associates, Inc., 2022, pp.~28266--28279.

\bibitem[CCL{\etalchar{+}}22]{SamplingEasy}
Sitan Chen, Sinho Chewi, Jerry Li, Yuanzhi Li, Adil Salim, and Anru~R. Zhang, \emph{Sampling is as easy as learning the score: theory for diffusion models with minimal data assumptions}, 2022.

\bibitem[CCL{\etalchar{+}}23]{chen2023probability}
Sitan Chen, Sinho Chewi, Holden Lee, Yuanzhi Li, Jianfeng Lu, and Adil Salim, \emph{The probability flow ode is provably fast}, 2023.

\bibitem[Che23]{chewi2023log}
Sinho Chewi, \emph{Log-concave sampling}, Book draft available at https://chewisinho. github. io (2023).

\bibitem[CLL23]{chen2023improved}
Hongrui Chen, Holden Lee, and Jianfeng Lu, \emph{Improved analysis of score-based generative modeling: User-friendly bounds under minimal smoothness assumptions}, 2023.

\bibitem[DeB22]{Bortoli2022ConvergenceOD}
Valentin DeBortoli, \emph{Convergence of denoising diffusion models under the manifold hypothesis}, 2022, arXiv:2208.05314.

\bibitem[dSeSG00]{deSouzaeSilva2000}
Edmundo de~Souza~e Silva and H.~Richard Gail, \emph{Transient solutions for markov chains}, pp.~43--79, Springer US, Boston, MA, 2000.

\bibitem[GLW{\etalchar{+}}23]{guo2023diffusion}
Zhiye Guo, Jian Liu, Yanli Wang, Mengrui Chen, Duolin Wang, Dong Xu, and Jianlin Cheng, \emph{Diffusion models in bioinformatics: A new wave of deep learning revolution in action}, 2023.

\bibitem[GPPX23]{gupta2023sampleefficient}
Shivam Gupta, Aditya Parulekar, Eric Price, and Zhiyang Xun, \emph{Sample-efficient training for diffusion}, 2023.

\bibitem[Gra77]{GRASSMANN197747}
W.K. Grassmann, \emph{Transient solutions in markovian queueing systems}, Computers \& Operations Research \textbf{4} (1977), no.~1, 47--53.

\bibitem[HNJ{\etalchar{+}}21]{hoogeboom2021argmax}
Emiel Hoogeboom, Didrik Nielsen, Priyank Jaini, Patrick Forré, and Max Welling, \emph{Argmax flows and multinomial diffusion: Learning categorical distributions}, 2021.

\bibitem[HSDL23]{huang2023conditional}
Han Huang, Leilei Sun, Bowen Du, and Weifeng Lv, \emph{Conditional diffusion based on discrete graph structures for molecular graph generation}, 2023.

\bibitem[HSG{\etalchar{+}}22]{ho2022video}
Jonathan Ho, Tim Salimans, Alexey Gritsenko, William Chan, Mohammad Norouzi, and David~J. Fleet, \emph{Video diffusion models}, 2022.

\bibitem[Hyv05]{Hyvrinen2005EstimationON}
Aapo Hyv{\"a}rinen, \emph{Estimation of non-normalized statistical models by score matching}, J. Mach. Learn. Res. \textbf{6} (2005), 695--709.

\bibitem[LLT22a]{convergence-score}
Holden Lee, Jianfeng Lu, and Yixin Tan, \emph{Convergence for score-based generative modeling with polynomial complexity}, 2022.

\bibitem[LLT22b]{convergencescore2}
\bysame, \emph{Convergence of score-based generative modeling for general data distributions}, 2022.

\bibitem[LLZB24]{li2024generalization}
Puheng Li, Zhong Li, Huishuai Zhang, and Jiang Bian, \emph{On the generalization properties of diffusion models}, 2024.

\bibitem[LME23]{lou2023discrete}
Aaron Lou, Chenlin Meng, and Stefano Ermon, \emph{Discrete diffusion language modeling by estimating the ratios of the data distribution}, 2023.

\bibitem[LPS17]{Lubotzky2017RamanujanG}
Alexander Lubotzky, Ralph Phillips, and Peter Sarnak, \emph{Ramanujan graphs}, Combinatorica \textbf{8} (2017), 261--277.

\bibitem[LWCC23]{li2023faster}
Gen Li, Yuting Wei, Yuxin Chen, and Yuejie Chi, \emph{Towards faster non-asymptotic convergence for diffusion-based generative models}, 2023.

\bibitem[MCSE23]{meng2023concrete}
Chenlin Meng, Kristy Choi, Jiaming Song, and Stefano Ermon, \emph{Concrete score matching: Generalized score matching for discrete data}, 2023.

\bibitem[ND21]{nichol2021improved}
Alex Nichol and Prafulla Dhariwal, \emph{Improved denoising diffusion probabilistic models}, 2021.

\bibitem[NSS{\etalchar{+}}20]{niu2020permutation}
Chenhao Niu, Yang Song, Jiaming Song, Shengjia Zhao, Aditya Grover, and Stefano Ermon, \emph{Permutation invariant graph generation via score-based generative modeling}, 2020.

\bibitem[RDN{\etalchar{+}}]{ramesh2022hierarchical}
Aditya Ramesh, Prafulla Dhariwal, Alex Nichol, Casey Chu, and Mark Chen, \emph{Hierarchical text-conditional image generation with clip latents}.

\bibitem[RPG{\etalchar{+}}21]{ramesh2021zeroshot}
Aditya Ramesh, Mikhail Pavlov, Gabriel Goh, Scott Gray, Chelsea Voss, Alec Radford, Mark Chen, and Ilya Sutskever, \emph{Zero-shot text-to-image generation}, 2021.

\bibitem[Sch23]{schneider2023archisound}
Flavio Schneider, \emph{Archisound: Audio generation with diffusion}, 2023.

\bibitem[SDME21]{song2021maximum}
Yang Song, Conor Durkan, Iain Murray, and Stefano Ermon, \emph{Maximum likelihood training of score-based diffusion models}, 2021.

\bibitem[SDWMG15]{pmlr-v37-sohl-dickstein15}
Jascha Sohl-Dickstein, Eric Weiss, Niru Maheswaranathan, and Surya Ganguli, \emph{Deep unsupervised learning using nonequilibrium thermodynamics}, Proceedings of the 32nd International Conference on Machine Learning (Lille, France) (Francis Bach and David Blei, eds.), Proceedings of Machine Learning Research, vol.~37, PMLR, 07--09 Jul 2015, pp.~2256--2265.

\bibitem[SE19]{SGMsong}
Yang Song and Stefano Ermon, \emph{Generative modeling by estimating gradients of the data distribution}, Advances in Neural Information Processing Systems, vol.~32, 2019.

\bibitem[SFLL23]{santos2023blackout}
Javier~E Santos, Zachary~R. Fox, Nicholas Lubbers, and Yen~Ting Lin, \emph{Blackout diffusion: Generative diffusion models in discrete-state spaces}, 2023.

\bibitem[SSK{\etalchar{+}}20]{SGMSDEsong}
Yang Song, Jascha Sohl{-}Dickstein, Diederik~P. Kingma, Abhishek Kumar, and Ben Poole, \emph{Score-based generative modeling through stochastic differential equations}, International Conference on Learning Representations, 2020.

\bibitem[SYD{\etalchar{+}}23]{sun2023scorebased}
Haoran Sun, Lijun Yu, Bo~Dai, Dale Schuurmans, and Hanjun Dai, \emph{Score-based continuous-time discrete diffusion models}, 2023.

\bibitem[SZD{\etalchar{+}}19]{seff2019discrete}
Ari Seff, Wenda Zhou, Farhan Damani, Abigail Doyle, and Ryan~P. Adams, \emph{Discrete object generation with reversible inductive construction}, 2019.

\bibitem[{van}92]{VANDIJK1992339}
Nico~M. {van Dijk}, \emph{Approximate uniformization for continuous-time markov chains with an application to performability analysis}, Stochastic Processes and their Applications \textbf{40} (1992), no.~2, 339--357.

\bibitem[VD92]{10.1016/0167-6377(92)90086-I}
Nico~M. Van~Dijk, \emph{Uniformization for nonhomogeneous markov chains}, Oper. Res. Lett. \textbf{12} (1992), no.~5, 283–291.

\bibitem[Vin11]{Vincent2011ACB}
Pascal Vincent, \emph{A connection between score matching and denoising autoencoders}, Neural Computation \textbf{23} (2011), 1661--1674.

\bibitem[YSM22]{yang2022diffusion}
Ruihan Yang, Prakhar Srivastava, and Stephan Mandt, \emph{Diffusion probabilistic modeling for video generation}, 2022.

\bibitem[ZYYK23]{zheng2023reparameterized}
Lin Zheng, Jianbo Yuan, Lei Yu, and Lingpeng Kong, \emph{A reparameterized discrete diffusion model for text generation}, 2023.

\end{thebibliography}

\newpage
\appendix
\onecolumn
\section{Omitted Proof}
\subsection{An intuitive proof of Proposition \ref{pathKL}}  \label{proof-pathKL}
Let $\tilde{X}_t^{\leftarrow}$ be a CTMC with the estimated generator $\hat{Q}^\leftarrow$ starting from $\tilde{X}_0^{\leftarrow} \sim p(T)$. Let $\tilde{\PP}^\leftarrow$ be the path measure of $(\tilde{X}_t^{\leftarrow}) $. By the chain rule of KL divergence
$$\KL(\PP^\leftarrow \|\hat{\PP}^\leftarrow ) = \KL(p(T) \| \gamma) + \KL(\PP^\leftarrow \| \tilde{\PP}^\leftarrow).  $$
Now we give an intuitive computation of $\KL(\PP^\leftarrow \| \tilde{\PP}^\leftarrow)$. Consider a path $\gamma$ from time $t=0$ to $T$ (corresponding to the time of the reversed process $(X_t^\leftarrow)_{0\le t \le T}$). Let us first compute the probability ratio $\frac{\PP^\leftarrow(\gamma)}{\tilde{\PP}^\leftarrow(\gamma)}$. We discretize $\gamma(t)$ with step size $\epsilon$ as $x_0,x_1,\ldots,x_L$, where $x_j=\gamma(\epsilon j)$ and $L=T/\epsilon$. $\frac{\PP^\leftarrow(\gamma)}{\tilde{\PP}^\leftarrow(\gamma)}$ is a product of multiple terms $\prod_{i} \frac{\PP^\leftarrow(x_{i+1}|x_i)}{\tilde{\PP}^\leftarrow(x_{i+1}|x_i)}$
\begin{itemize}
\item When $x_i =x_{i+1}$, the ratio is $\frac{1-\sum_{y\ne x_i} Q_{y,x_i}c_{x_i,y}\epsilon}{1-\sum_{y \ne x_i} Q_{y,x_i}s_{x_i,y}\epsilon} $.
\item When $x_i\not=x_{i+1}$, the ratio is $\frac{Q_{x_{i+1}, x_{i}} c_{x_{i}, x_{i+1}}\epsilon}{Q_{x_{i+1}, x_{i}}s_{x_{i},x_{i+1}}\epsilon} = \frac{c_{x_i,x_{i+1}}}{s_{x_i,x_{i+1}}}$.
\end{itemize}
Taking the product and log gives
\[
\log  \frac{\d \PP^\leftarrow(\gamma)}{\d  \tilde{\PP}^\leftarrow(\gamma)} \approx
\sum_{i:\text{no jump}} \left(\sum_{y\ne x_i} Q_{y,x_i} (-c_{x_i, y} + s_{x_i, y}) \right) \epsilon +
\sum_{i:\text{jump}} \sum_{y\ne x_i} \log \frac{c_{x_i,y}}{s_{x_i,y}} \delta_{x_i \rightarrow y}
\]
where $\delta_{x_i \rightarrow y}$ is equal to $1$ if $x_i$ jumps to $y$ otherwise $0$.

The KL divergence is then the expectation of this quantity w.r.t. $\PP^\leftarrow$. To simplify, we use the fact that $\EE \delta_{x_i \rightarrow y} = Q_{y,x_i} c_{x_i,y}\epsilon$ for $y\ne x_i$ and take the limit $\epsilon \to 0$ to obtain
\begin{align*}
&  \EE_{\PP^\leftarrow} \log  \frac{\d \PP^\leftarrow(\gamma)}{\d  \tilde{\PP}^\leftarrow(\gamma)} \\ & =  \EE_{\mathbb{P}^\leftarrow}\left[ \int_0^T \sum_{y \ne X_t^\leftarrow} Q_{y,X_t^\leftarrow}(T-t) \left(- c_{X_t^\leftarrow,y}(T-t) + s_{X_t^\leftarrow,y}(T-t)  + c_{X_t^\leftarrow,y}\log \frac{c_{X_t^\leftarrow,y}(T-t)}{s_{X_t^\leftarrow,y}(T-t)} \right)\d t \right]. 
\end{align*}
Finally, by reversing the time, we have
\begin{align*}
  \EE_{\PP^\leftarrow} \log  \frac{\d \PP^\leftarrow(\gamma)}{\d  \tilde{\PP}^\leftarrow(\gamma)}  =  \EE_{\mathbb{P}}\left[ \int_0^T \sum_{y \ne X_t} Q_{y,X_t}(t) \left(- c_{X_t,y}(t) + s_{X_t,y}(t)  + c_{X_t,y}\log \frac{c_{X_t,y}(t)}{s_{X_t,y}(t)} \right)\d t \right], 
\end{align*}
which completes the proof.
\subsection{Proof of Proposition \ref{converge-forward}} \label{proof-convergenceforward}
\begin{proof}
For the first inequality, it is shown in \cite[Theorem 5.1]{concentration-book} that the uniform distribution over the hypercube satisfies the log-Sobolev inequality with constant 2 (w.r.t. the Markov semigroup associated with the generator $Q$).
 This implies the exponential mixing of the forward process in KL divergence (see, for example, \cite{chewi2023log}):
$$       \KL(p(T) \| \gamma)   \le  e^{-T} \KL(p(0) \| \gamma).      $$

The second inequality is because the KL divergence between $p_0$ and the uniform distribution can be decomposed to 
\begin{align}
\KL(p_0 \| \gamma) \notag  & = \sum_{x \in \{0,1\}^d} p_x(0) \log p_x(0) -  \sum_{x \in \{0,1\}^d} p_x(0) \log \frac{1}{2^d} \notag \\
  & \lesssim  \sum_{x \in \{0,1\}^d} p_x(0) \log p_x(0) + d \label{entropy}
\end{align}
For distributions on a finite set, the maximum entropy is achieved by the uniform distribution, so the entropy term in \eqref{entropy} could be bounded by the entropy of the uniform distribution, which is $d$ (up to a constant).
\end{proof}
\subsection{Proof of Theorem \ref{general-bound} }    \label{proof-generalbound}  
\paragraph*{Proof of the bound of $\lambda$}
In algorithm \ref{alg}, since the number of steps in each time interval $[t_k,t_{k+1}]$ is sampled from $M_k \sim \mathrm{Poisson}(\lambda_k(t_{k}-t_{k-1}))$, the total number of steps follows a Poisson distribution with parameter $\lambda: = \sum_{k=1}^N \lambda_k(t_k-t_{k-1})$. Now we evaluate $\mu$. Recall that 
$$ \lambda_k = \frac{Cd}{\min(1,T-t_k)}.$$
Let $s_{k} = T-t_k$. Recall that 
$$ \lambda_k = \frac{Cd}{\min(1,s_k)},\quad     s_k - s_{k+1} \le cs_{k+1}.   $$
For $\delta \le s_k \le 1$ we have
\begin{align*}
\sum_{k: \delta \le s_k < 1} \lambda_k(s_{k-1}-s_k) = \sum_{k: s_k < 1} \frac{Cd}{s_k}(s_{k-1} - s_{k}) \lesssim  \sum_{k: s_k < 1} \frac{C(c+1)d}{s_{k-1}}(s_{k-1} - s_{k}) \lesssim \int_\delta^1 \frac{d}{s} \mathrm{d} s = d \log \left(\frac{1}{\delta}\right). 
\end{align*}
For $s_k \ge 1$, we have
$$ \sum_{k: 1 \le s_k \le T} \lambda_k(s_{k-1}-s_k) =\sum_{k: 1 \le s_k \le T} Cd (s_k - s_{k-1})= Cd(T-1)  \lesssim dT.   $$
Combining the two parts, we conclude that 
$$ \lambda = \sum_{k=1}^N \lambda_k(t_k - t_{k-1}) = \sum_{k=1}^N \lambda_k (s_k - s_{k-1}) = \sum_{k: \delta \le s_k < 1} \lambda_k(s_{k-1}-s_k) + \sum_{k: 1 \le s_k \le T} \lambda_k(s_{k-1}-s_k) \lesssim d\left(T+\log\left(\frac{1}{\delta} \right)\right). $$  \qed
\paragraph*{Proof of the KL divergence bound}
Since our algorithm exactly simulates the reversed process, from Proposition \ref{pathKL}, the KL divergence between $p(\delta)$ and $\hat{p}^\leftarrow(T-\delta)$ is bounded by the KL divergence between the two path measures:
\begin{equation}\label{TV-bound} \mathrm{KL}(p(\delta) \| \hat{p}^\leftarrow(T-\delta))   \le \mathrm{KL}(p(T) \| \mathrm{Unif}(\{0,1\}^d)) + \int_\delta^T \EE_{X_t \sim p(t)} \ell (c_{X_t}(t), s_{X_t}(t)),
\end{equation}
where $\ell: \RR^d \times \RR^d \to \RR$ is given by
$ \ell(c,s) =  \sum_{i=1}^d \left(-c_i+s_i + c_i\log \frac{c_i}{s_i}\right)$. \\
In the RHS of \eqref{TV-bound}, the first term is bounded by Proposition \ref{converge-forward} that
$$  \mathrm{KL}(p(T) \| \mathrm{Unif}(\{0,1\}^d)) \lesssim de^{-T};      $$
the second term is bounded by $T\epsilon$ under Assumption \ref{as1}. Thus, we obtain the desired bound. 
\paragraph*{Proof of the TV distance bound}
We bound the TV distance between the data distribution $p_0$ and the perturbed distribution $p_\delta$. Consider the forward process $(X_t)_{t\ge 0}$. Since $(X_0, X_\delta)$ gives a coupling of $p_0$ and $p_\delta$, we have
$$ \mathrm{TV}(p_0,p_\delta) \le \PP(X_0 \ne X_\delta). $$
$\PP(X_0 \ne X_\delta)$ equals to the probability that a $\mathrm{Poisson}(d\delta)$ random variable is nonzero, which is $1-e^{-d\delta}$. Thus by triangle inequality and Pinsker's inequality $\mathrm{TV} \lesssim  \sqrt{\mathrm{KL}}$, we have
$$\mathrm{TV}(p_0,\hat{p}^\leftarrow(T-\delta))  \le  d\delta +  \mathrm{TV}(p_\delta,\hat{p}^\leftarrow(T-\delta)) \lesssim (1-e^{-d\delta}) + \sqrt{T\epsilon + de^{-T}}. $$
We complete the proof.
\qed

\subsection{Proof of Theorem \ref{bounded-ratio}} \label{proof-boundedratio}
\begin{lemma} \label{bound-score-smooth}
Let $p(t)$ be the distribution of the forward CTMC with generator $Q$ given in \eqref{OU-generator}. Suppose Assumption \ref{as3} holds. For any $t>0$ we have
$$\frac{p_{x+e_i}(t)}{p_x(t)} \le L.$$
\end{lemma}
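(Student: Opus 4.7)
\medskip

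\noindent\textbf{Proof proposal.} The plan is to exploit the explicit integral representation of $p(t)$ in terms of $p(0)$ and the heat kernel $g_w(t)$ from Proposition \ref{explicit-transition}, and then push the one-step neighbor bound from Assumption \ref{as3} through this representation by a simple change of summation variable. This should give the bound pointwise (for every $w$ in the sum), so the ratio is controlled termwise and no cancellation argument is needed.

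\medskip

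\noindent Concretely, I would start from the identity
\begin{equation*}
p_x(t) \;=\; \sum_{a+w=x} p_a(0)\, g_w(t),
\qquad
p_{x+e_i}(t) \;=\; \sum_{a+w=x+e_i} p_a(0)\, g_w(t),
\end{equation*}
which is Proposition \ref{explicit-transition} written out. In the numerator, I would make the substitution $a' = a+e_i$ (mod $2$); since $a+w=x+e_i$ is equivalent to $a'+w=x$, this gives
\begin{equation*}
p_{x+e_i}(t) \;=\; \sum_{a'+w=x} p_{a'+e_i}(0)\, g_w(t).
\end{equation*}
Now the key point: Assumption \ref{as3} states $p_{y+e_i}(0)/p_y(0) \le L$ for every $y$, so $p_{a'+e_i}(0) \le L\, p_{a'}(0)$ termwise. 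Plugging this in and factoring out $L$ yields
\begin{equation*}
p_{x+e_i}(t) \;\le\; L \sum_{a'+w=x} p_{a'}(0)\, g_w(t) \;=\; L\, p_x(t),
\end{equation*}
which is exactly the desired bound $\frac{p_{x+e_i}(t)}{p_x(t)} \le L$.

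\medskip

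\noindent There is essentially no obstacle: the statement follows from the fact that the forward independent-flip dynamics acts by convolution with a product heat kernel, and convolution preserves the uniform bound on the ratio of the density at neighboring points. The only thing to be careful about is matching the modulo-$2$ arithmetic correctly under the change of variable $a \mapsto a+e_i$, and noting that Assumption \ref{as3} quantified over all $x$ automatically gives control in both directions (so the reindexing does not cost anything).
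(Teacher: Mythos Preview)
Your proof is correct and follows essentially the same approach as the paper: both reindex the convolution sum in the numerator via $a\mapsto a+e_i$ and then apply Assumption \ref{as3} termwise. The only cosmetic difference is that the paper packages the termwise bound as an expectation $\EE_{\tilde p(a\mid x)}\big[p_{a+e_i}(0)/p_a(0)\big]\le L$ with respect to the posterior $\tilde p(a\mid x)\propto p_a(0)g_{x-a}(t)$, while you bound the sum directly.
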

\begin{proof}
    Let $g$ be the heat kernel defined in Proposition \ref{explicit-transition}. Consider the conditional distribution $\tilde{p}(a|x) \propto p_a(0)g_{x-a}(t)$ that is the distribution of $X_t$ conditional on $X_0$. We write the probability ratio as 
    \begin{align*}
 \frac{p_{x+e_i}(t)}{p_x(t)} & = \frac{\sum_{a+w=x+e_i}p_a(0)g_w(t) }{\sum_{a+w=x}p_a(0)g_w(t)} \\
 &  = \frac{\sum_{a+w=x}p_{a+e_i}(0)g_{w}(t) }{\sum_{a+w=x}p_a(0)g_w(t)} \\
 &  =  \frac{\sum_{a+w=x}p_a(0)\frac{p_{a+e_i}(t)}{p_a(t)}g_w(t) }{\sum_{a+w=x}p_a(0)g_w(t)} \\
 &  = \EE_{\tilde{p}(a|x)} \frac{p_{a+e_i}(0)}{p_a(0)} \\
 &  \le L,
\end{align*}
where the last inequality comes from Assumption \ref{as3}. We complete the proof.
\end{proof}
\paragraph*{Proof of Theorem 4.7} 
The KL divergence bound is similar to the proof of Theorem 4.4. We only need to consider the total number of transitions.
We choose the time partition such that
$$ t_N = T,\,t_{N-1} = T- 1/L,\,t_{k+1} - t_k \le c(T-t_{k+1}),\,\forall t_{k+1}- t_k \le c(T-t_{k+1}),\, 1\le k \le N-2      $$
for some absolute constant $c$ and choose $\lambda_k$'s by
$$ \lambda_N =  \frac{d}{L},\, \lambda_k =\frac{Cd}{\min(1,T-t_k)},\, 1\le k \le N-1.         $$
Let $s_k = T- t_k$. The total number of steps follows a Poisson distribution with parameter $\lambda =\sum_{k=1}^N \lambda_k(t_k - t_{k-1}) $. We have
\begin{align*}
\lambda & = \lambda_N(t_{N}-t_{N-1}) +  \sum_{k=1}^{N-1} \lambda_k(t_k - t_{k-1}) \le 1 + \sum_{k: 1/L \le s_k < 1 }\frac{Cd}{s_k}(s_{k-1}-s_k) +  \sum_{k: 1\le s_k \le T} Cd(s_k - s_{k-1}) \\
& \lesssim d + \int_{1/L}^1 \frac{d}{s} \mathrm{d} s + \sum_{k: 1\le s_k \le T} d(s_k - s_{k-1}) \lesssim d\left(T+ \log L \right).
\end{align*}
We complete the proof.
\qed

\end{document}